\setlist{nosep}
\declaretheorem[name=Theorem]{Theorem}
\declaretheorem[name=Definition]{Definition}
\declaretheorem[name=Trajectory Policy Gradient Theorem]{MainTheorem}
\declaretheorem[name=Lemma]{Lemma}
\declaretheorem[name=Corollary]{Corollary}
\declaretheorem[name=Remark]{Remark}
\newcommand{\TrajectoryTheorem}{\textit{Trajectory Policy Gradient Theorem~}}
\newcommand{\ZeroAssumption}{\textit{Zero-Reward Assumption }}
\newcommand{\Exp}{\mathbb{E}}
\newcommand{\Var}{\mathbf{Var}}
\newcommand{\Cov}{\mathbf{Cov}}
\newcommand{\Path}{\mathbf{W}}
\newcommand{\Patht}{\mathbf{W}^{(t)}}
\newcommand{\Pathk}{\mathbf{W}^{(k)}}
\newcommand{\PathtPlusOne}{\mathbf{W}^{(t+1)}}
\newcommand{\PolicyProb}{\pi_{\theta}}
\newcommand{\PolicyProbt}{\pi(w_t|\Path_{0,t-1})}
\newcommand{\Prompt}{w_0}
\newcommand{\VFunc}{V_{\PolicyProb}}
\newcommand{\QFunc}{Q_{\PolicyProb}}
\newcommand{\GFunc}[1]{G(#1)}
\newcommand{\RM}{RM}
\newcommand{\Eqn}[1]{Eqn. (#1)}
\title{Response-Level Rewards Are All You Need for Online Reinforcement Learning in LLMs: A Mathematical Perspective}
\author{
Shenghua He\textsuperscript{\bf{*}}$^{1}$, Tian Xia\textsuperscript{\bf*}$^{2}$, Xuan Zhou$^{*1}$, Hui Wei$^{3}$\\
    $^1$ Amazon, $^2$ PAII Inc., $^3$ UC Merced\\
    \texttt{\{shenghh2015, TianXia0209, zhouxss093, wllnyubupt\}@gmail.com 
    }
}
\begin{document}
\maketitle
\footnotetext{Equal Contribution. Tian Xia originated the idea and provided the mathematical proof; Shenghua He extended the idea and derived key conclusions; Xuan Zhou provided invaluable feedback on the proof.}

\begin{abstract}
We study a common challenge in reinforcement learning for large language models (LLMs): \emph{the Zero-Reward Assumption}, where non-terminal actions (i.e., intermediate token generations) receive zero task-specific immediate reward, while only the final token receives a reward for the entire response. This assumption arises frequently in practice, as precise token-level rewards are often difficult or infeasible to obtain in LLM applications.
Recent work has addressed the limitations of this assumption in two major directions. One stream, including methods such as GRPO and ReMax, simplifies the critic network under the belief that zero intermediate rewards render it less influential. Another stream argues that the lack of fine-grained token-level feedback hinders effective credit assignment and attempts to model explicit token-level rewards.

In this work, we provide a unifying theoretical perspective. We introduce the \emph{Trajectory Policy Gradient Theorem}, which shows that the policy gradient based on true, unknown token-level rewards can be unbiasedly estimated using only a response-level reward model, regardless of whether the Zero-Reward Assumption holds or not, for algorithms in the REINFORCE and Actor-Critic families.
This result reveals that widely used methods such as PPO, GRPO, ReMax, and RLOO inherently possess the capacity to model token-level reward signals, offering a theoretical justification for response-level reward approaches. Our findings pave the way for more practical, efficient LLM fine-tuning, allowing developers to treat training algorithms as black boxes and focus on improving the response-level reward model with auxiliary sub-models.
We also offer a detailed analysis of popular RL and non-RL methods, comparing their theoretical foundations and practical advantages across common LLM tasks. Finally, we propose a new algorithm: \textbf{T}oken-\textbf{Re}inforced \textbf{P}olicy \textbf{O}ptimization (\textbf{TRePO}, /\textprimstress trep\textschwa\textupsilon /), a theoretically grounded method that is simpler than PPO, matches GRPO in memory efficiency, and holds promise for broad applicability.
\end{abstract}
\newpage
\section{Introduction}
\label{sec:intro}
With the rapid advancement of large language models (LLMs), reinforcement learning (RL) has become a key post-training technique for enhancing their general instruction-following abilities (e.g., GPT-4o~\cite{hurst2024gpt}, Gemini~\cite{team2023gemini}, Qwen2.5~\cite{yang2024qwen2}, Nova~\cite{intelligence2024amazon}) and advanced reasoning skills (e.g., O-1~\cite{jaech2024openai}, Deepseek-R1~\cite{guo2025deepseek}).

Proximal Policy Optimization (PPO)\cite{schulman2017proximal}, widely used in reinforcement learning from human feedback (RLHF), has been a foundational RL algorithm for post-training LLMs. 
It follows a general actor-critic architecture, where the policy model is the LLM being trained and the critic model estimates the expected long-term reward of intermediate states to reduce gradient variance during optimization.
PPO has demonstrated strong empirical success across various tasks, including summarization~\cite{stiennon2020learning}, instruction following~\cite{ouyang2022training}, and dialogue~\cite{bai2022training}.
Despite its practical effectiveness, PPO requires training both the policy and critic models, even though only the policy is typically used during inference.

In practical LLM application scenarios, PPO is typically implemented based on an assumption referred to in this work as the \ZeroAssumption: any state-action leading to an intermediate state receives zero immediate reward, while the state-action leading to the terminal state receives the full response-level reward for the entire trajectory.
This is primarily because, in practice, it is often difficult, costly, or even infeasible to precisely define an immediate reward for each state-action pair, whereas a response-level reward is much more readily obtainable.
To date, many popular prior works have recognized the limitations of the flawed \ZeroAssumption and attempted to address it by different ideas, which are roughly categorized into two groups.

In the first group,  Re-Max\cite{li2023remax} states that~``$r(s_t, a_t)$ is non-zero only when the whole trajectory ends at $t = T$. This means that RLHF tasks are close to single-stage optimization problems since the rewards of the intermediate stages are 0~". 
Group Relative Policy Optimization (GRPO) \cite{shao2024deepseekmath} states that-``While in the LLM context, usually only the last token is assigned a reward score by the reward model, which may complicate the training of a value function that is accurate at each token". 
Leave-One-Out Reinforce (RLOO) \cite{ahmadian2024back} states that~``modeling of partial sequences is unnecessary since rewards from humans are only attributed to full generations, with no true rewards for any intermediary token generate". 
These works focus on modeling response-level rewards and eliminate the critic network from PPO, which is typically as large as the policy model. 
Accordingly, methods falling under the \ZeroAssumption are generally categorized as \textit{response-level} algorithms. 
However, while these approaches are strong from an engineering standpoint, they often overlook the theoretical shortcomings of the \textit{Zero-Reward Assumption}. 
As a result, there is a general lack of theoretical analysis surrounding them.

In another group, some works \cite{zhong2024dpo,chan2024dense,cui2025process,li2024r3hf} attempted to address it by approximating instant rewards for intermediate states using a learned response-level reward model in specific application scenarios.
Chan et al.\cite{chan2024dense} proposed an PPO variant called ABC, which distributes the response-level reward across tokens using attention maps from the reward model itself.
Reinforced Token Optimization (RTO)~\cite{zhong2024dpo} employs a DPO model that is trained with response-level labels as an implicit token-level reward model to generate token-level rewards for PPO training.
More recently, Cui et al.~\cite{cui2025process} introduced a framework called Process Reinforcement through Implicit Rewards (PRIME), which learns an online implicit process reward model from outcome labels using a cross-entropy loss during the RL training for LLM mathematical tasks. 
These approaches typically rely on a second assumption, which we refer to as \textit{partial-reward assumption}: \textit{a token-level reward model can be learned from response-level preference or outcome labels and can provide rich and reliable feedback for partial trajectories}.
However, the partial-reward assumption is largely heuristic, even though it does provide token-level rewards in form.
In addition, RL approaches based on this assumption typically require training a response-level reward model to approximate token-level instant rewards, which greatly limits their practical generality.
Though, these works brings improvements over PPO, yet there are still no clear analysis of where the improvement comes from.
For example, is it from equipping the new methods with token-level capability, or bringing in additional information actually?

In this study, we revisit the fundamentals of RL to re-derive all formulas relevant to RL for LLMs and explore a general RL solution.
Our contributions span three main aspects as described below:

\textbf{Trajectory Policy Gradient Theorem:} 
We propose a \textit{trajectory policy gradient theorem}, which demonstrates in LLMs application scenarios, the \ZeroAssumption used in REINFORCE and Actor-Critic RL algorithms, combined with a response-level reward model, can provide an unbiased estimate of any unknown token-level rewards.
This theorem offers both a theoretical guarantee and practical guidance for developing RL algorithms across a wide range of LLM applications—where token-level rewards are difficult, costly, or even impossible to define precisely, while response-level rewards are readily available. Such scenarios include question answering, mathematical reasoning, multi-agent training, and more.

\textbf{Theoretic analysis of RL methods:} 
Based on the proposed theorem, we theoretically analyze a set of popular RL and RL-free algorithms, e.g., PPO, GRPO, ReMax, RLOO and DPO. 
We conclude that, even under the \ZeroAssumption, PPO is able to explicitly models task-specific token-level information, and other algorithms do as well.
We further derive their core differences in terms of formulas, and conjecture PPO has a theoretical advantage over others, in terms of smaller approximation deviation from the exact RL objective and more accurate baseline selection for policy gradient variance reduction.
This finding might help explain why PPO has generally demonstrated advantages over DPO~\cite{xu2024dpo} and leads us to predict that PPO may also have the potential to outperform GRPO, ReMax, and RLOO in broader application scenarios.
Besides the generality from the \TrajectoryTheorem, it also provides the flexibility for practical applications so that more attention can be put on the design of the reward model, which simplifies the development and lowers the learning curve of rapidly applying RL in various applications.
For instance, in a mathematical reasoning task, various aspects and levels of granularity—such as the correctness of the final answer (binary outcomes), the correctness of intermediate steps (if detectable), and the formatting of the generated text—are uniformly aggregated into a response-level scalar reward, while token-level information is still effectively modeled during RL for LLM alignment.

\textbf{General token-level RL:} 
This new theorem removes both the \ZeroAssumption and \textit{partial-reward assumption}, and thus guides us to propose a theoretically grounded token-level RL approach, \textbf{T}oken-\textbf{Re}inforced \textbf{P}olicy \textbf{O}ptimization (TRePO, /\textprimstress trep\textschwa\textupsilon/), for LLM alignment, making it broadly applicable to any LLM tasks in which intermediate steps play a critical role but are difficult or infeasible to evaluate precisely.
TRePO is conceptually simpler than PPO, as it does not require an extra critic model to estimate the baseline, and is therefore as memory-efficient as GRPO. 
Although it introduces some engineering complexity due to additional sampling, we propose several approximation techniques to enable efficient implementation.

\section{A Revisit of RL Basics in LLMs}
\label{sec:rl_basics}

\subsection{Problem Formulation of RL for LLMs}
In a general setup of RL for LLM alignment, language generation is modeled as a sequential Markov Decision Process (MDP) $\{\mathcal{S},\mathcal{A},\mathcal{R}, \pi\}$, where policy $\pi_\theta(a_t|s_{t-1})$ represents a language model parameterized by $\theta$ that generates a response $\mathbf{y}=[y_0, y_1, ..., y_{T}]$ autoregressively given an input $\mathbf{x}=[x_0,x_1,...,x_{M-1}]$. 
The state at time step $t$, $s_t \in \mathcal{S}$, is the concatenation of the input prompt and the tokens generated up to that point:
$s_t = \textbf{x}; \, \textbf{y}_{<t} = \left[ x_0, \ldots, x_{M-1}, y_0, \ldots, y_{t-1} \right]$. 
At each time step, the action $a_t\in\mathcal{A}$ corresponds to generating the next token $y_t$ from a fixed vocabulary.

The MDP begins with the initial state $s_0=\textbf{x}$, and after each action, the environment transitions to the next state, $s_{t+1}=s_t:[a_t]$, by appending the action $a_t$ to the current state $s_{t-1}$.
During the token generation process, an instant reward $r_t\in\mathcal{R}$ is assigned to an intermediate $a_t$ starting from state $s_t$. 
A trajectory $\tau=(s_0,a_0,r_1,a_1,s_1,...)$ is therefore a sequence of state-action pairs, starting from the input prompt until the terminate action [EOS] token.

The cumulative future return starting from $s_t$ is then defined as $R(\tau)=\sum_{t=0}^{T-1}\gamma^{T-1}r_t$, where $\gamma$ is a discounted factor that penalizes the delayed rewards.
In many LLM applications, $\gamma$ is commonly set to 1. Differently, in our study, we do not need to specify $\gamma$, as our proposed theorem (see Section~\ref{sec:method}) is independent of its value.
Furthermore, in many LLM applications, such as mathematical reasoning, it is often difficult, costly, or even infeasible to precisely define an instant reward for each state-action pair.
Thus, existing RL approaches either enforce a \textit{zero-reward assumption} or a \textit{partial-reward assumption} (explained in Section~\ref{sec:intro}).
In contrast, our proposed theorem removes all such assumptions and requires only the cumulative return $R(\tau)$ for each trajectory $\tau$, while still being capable of modeling token-level information.

Given the definition of $R(\tau)$, the $V(s_t)$ is the value function of state $s_t$ defined as the expected cumulative future return starting from state $s_t$ and following the policy $\pi$: $V(s_t) = \mathbb{E}_{\tau \sim \pi_\theta} \left[ R(\tau) \mid s_0 = s_t \right]$.
$Q(s_t,a_t)$ is the state-action value of $(s_t, a_t)$ defined as expected cumulative future return starting from state $s_t$ after taking action $a_t$, accordingly $Q(s_t,a_t)=r_{t+1} + \gamma V(s_{t+1})$.

The goal of RL is the maximization of the expected accumulative future return of a MDP:
\begin{equation}
\max_{\theta} \mathcal{J}(\theta) = \max_{\theta} \mathbb{E}_t [\log\pi_\theta(a_t|s_{t}) Q(s_{t}, a_t)].
\label{eq:rl_objective}
\end{equation}

\subsection{Policy Gradient Methods}

Policy gradient methods for a general RL task can be used to optimize the objective in Equation~(\ref{eq:rl_objective}), typically involving two iterative steps:
\begin{itemize}[leftmargin=1.5em]
\item Gradient estimation: $\nabla \mathcal{J}(\theta_t) = \mathbb{E}_t [\nabla \log \pi_{\theta_t}(a_t |s_t) Q(s_t, a_t)]$
\item Gradient ascent: $\theta_{t+1} = \theta_t + \eta \nabla_{\theta_t} J(\theta_t)$, where $\eta$ is the learning rate.
\end{itemize}
In practice, a baseline $b(s_t)$ is strategically introduced to reduce the variance of the gradient estimate:
\begin{equation}
\nabla \mathcal{J}(\theta) = \mathbb{E}_t \left[ \nabla \log \pi_\theta(a_t | s_t) \left(Q(s_t, a_t) - b(s_t)\right) \right],
\end{equation}
which remains an unbiased estimator of $\nabla \mathcal{J}(\theta)$, as $b(s_t)$ depends only on the state $s_t$ and is independent of the action distribution $\pi(a_t | s_t)$. Accordingly, the general RL goal is equivalent to:
\begin{equation}
\max_{\theta} \mathcal{J}(\theta) = \max_{\theta} \mathbb{E}_t [\log\pi_\theta(a_t|s_{t}) \left(Q(s_t, a_t) - b(s_t)\right)].
\label{eq:rl_objective2}
\end{equation}
In a canonical actor-critic RL architecture, $b(s_t)$ is set to the value function $V(s_t)$, although this is not strictly required.
\section{Trajectory Policy Gradient Theorem and TRePO}
\label{sec:method}
In this section, we present the mathematical foundation and framework of our proposed Trajectory Policy Gradient Theorem, along with the implementation details of the TRePO algorithm. Table \ref{tab:symbol_table} summarizes the standard notation used in reinforcement learning for LLMs, which we adopt throughout this section.

\begin{table}[H]
\label{tab:symbol_table}
\begin{centering}
\begin{tabular}{cc}
\hline 
Symbol & Meaning\tabularnewline
\hline 
$\Path$ & A full trajectory sampled for current optimization.\tabularnewline
T & The number of output tokens in $\Path$.\tabularnewline
$\Path_{a,b}$ & A substring in $\Path$.\tabularnewline
$w_{t}$ & The $t$-th token in $\Path$.\tabularnewline
$\Prompt$ & User prompt tokens.\tabularnewline
$\Patht$ & A full trajectory that shares the same prefix of $\Path$, s.t., $\Patht_{0,t-1}=\Path_{0,t-1}$.\tabularnewline
$r(w_{t})$ & The unknown reward for $w_{t}$, depending on $\Path_{0,t-1}$.\tabularnewline
$\VFunc(\Path_{0,t})$ & The expected future return from state $\Path_{0,t}$.\tabularnewline
$\QFunc(\Path_{0,t-1},w_{t})$ & $r(w_{t})+\gamma\VFunc(\Path_{0,t})$.\tabularnewline
$c(\Path_{0,t})$ & Any constant that only depends on $\Path_{0,t}$.\tabularnewline
$G_{t}(\Path)$ & Discounted reward, $G_{t}(\Path)=\sum_{k=t}^{T}\gamma^{k-t}r(w_{t})$.\tabularnewline
$\PolicyProb(w_{t}|\Path_{0,t-1})$ & The probability of generating $w_{t}$, which is represented by a LLM.\tabularnewline
$\RM(\Path)$ & Reward score defined on a full trajectory.\tabularnewline
$\gamma$ & The reward discount factor.\tabularnewline
$\lambda$ & Used in GAE to control the degree of TD and Monte Carlo. \tabularnewline
\hline 
\end{tabular}
\par\end{centering}
\caption{Common symbols used in RL for LLMs, which we adopt throughout this section.}

\end{table}

\subsection{Main theory}

\begin{Lemma}\label{llm-gradienttheorem}
In LLMs application scenarios, the policy gradient theorem with a discounted reward factor $\gamma$ would be
\begin{align}
\nabla\mathcal{J}(\theta) =\Exp_{\Path}\sum_{t=1}^{T}\gamma^{t-1}\QFunc(\Path_{0,t-1},w_{t})\nabla\log\pi_{\theta}(w_{t}|\Path_{0,t-1})
\end{align}
where ${\Path=\Prompt w_{1}w_{2}\ldots w_{T}\sim\PolicyProb(\Path|\Prompt)}$.
\end{Lemma}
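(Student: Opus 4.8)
The plan is to take the RL objective to be the expected cumulative return from the prompt, $\mathcal{J}(\theta)=\VFunc(\Prompt)=\Exp_{\Path\sim\PolicyProb(\Path|\Prompt)}[G_1(\Path)]$ with $G_1(\Path)=\sum_{k=1}^{T}\gamma^{k-1}r(w_k)$, and to apply the log-derivative (REINFORCE) identity. Since the trajectory law factorizes autoregressively, $\PolicyProb(\Path|\Prompt)=\prod_{t=1}^{T}\PolicyProb(w_t|\Path_{0,t-1})$, so $\nabla\log\PolicyProb(\Path|\Prompt)=\sum_{t=1}^{T}\nabla\log\PolicyProb(w_t|\Path_{0,t-1})$; the instant rewards $r(w_k)$ are supplied by the environment and carry no direct $\theta$-dependence, so $\nabla G_1(\Path)=0$ for a fixed $\Path$. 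Exchanging $\nabla$ with the sum over trajectories then yields $\nabla\mathcal{J}(\theta)=\Exp_{\Path}[G_1(\Path)\sum_{t=1}^{T}\nabla\log\PolicyProb(w_t|\Path_{0,t-1})]$.

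Next I would run the causality (reward-to-go) argument to discard contributions of rewards that precede each action. Expanding the product into $\sum_{t=1}^{T}\sum_{k=1}^{T}\gamma^{k-1}\Exp_{\Path}[r(w_k)\nabla\log\PolicyProb(w_t|\Path_{0,t-1})]$ and conditioning on the prefix $\Path_{0,t-1}$, for every $k<t$ the factor $r(w_k)$ is determined by $\Path_{0,t-1}$ while $\Exp_{w_t\sim\PolicyProb(\cdot|\Path_{0,t-1})}[\nabla\log\PolicyProb(w_t|\Path_{0,t-1})]=\sum_{w_t}\nabla\PolicyProb(w_t|\Path_{0,t-1})=\nabla 1=0$, so those cross-terms vanish. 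What remains is $\nabla\mathcal{J}(\theta)=\sum_{t=1}^{T}\Exp_{\Path}\bigl[\bigl(\sum_{k=t}^{T}\gamma^{k-1}r(w_k)\bigr)\nabla\log\PolicyProb(w_t|\Path_{0,t-1})\bigr]$; factoring out $\gamma^{t-1}$ and recognizing the discounted reward-to-go $G_t(\Path)=\sum_{k=t}^{T}\gamma^{k-t}r(w_k)$ rewrites this as $\sum_{t=1}^{T}\Exp_{\Path}[\gamma^{t-1}G_t(\Path)\nabla\log\PolicyProb(w_t|\Path_{0,t-1})]$.

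The final step converts the Monte Carlo return into the state--action value by the tower property. Conditioning on the prefix $\Path_{0,t}$ through $w_t$, the factor $\nabla\log\PolicyProb(w_t|\Path_{0,t-1})$ is a function of $\Path_{0,t}$ and can be pulled out, while $\Exp[G_t(\Path)\mid\Path_{0,t}]=r(w_t)+\gamma\VFunc(\Path_{0,t})=\QFunc(\Path_{0,t-1},w_t)$ by the definitions of $\QFunc$ and $\VFunc$. Substituting and re-collecting the per-step terms under a single expectation over full trajectories gives $\nabla\mathcal{J}(\theta)=\Exp_{\Path}\sum_{t=1}^{T}\gamma^{t-1}\QFunc(\Path_{0,t-1},w_t)\nabla\log\PolicyProb(w_t|\Path_{0,t-1})$, the claimed identity. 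An equivalent route is the classical Bellman-recursion argument: differentiate $\VFunc(\Path_{0,t-1})=\sum_{w_t}\PolicyProb(w_t|\Path_{0,t-1})\QFunc(\Path_{0,t-1},w_t)$, unroll the recursion down the generation tree, and collect the discounted state-visitation weights.

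I expect the main obstacle to be keeping the bookkeeping honest for the random horizon $T=T(\Path)$: the autoregressive factorization, the vanishing of the backward cross-terms, and the interchange of $\nabla$ with summation all implicitly run sums to a common horizon, which is handled cleanly by appending an absorbing zero-reward state to every terminated trajectory (equivalently, setting $r(w_k)=0$ and $\nabla\log\PolicyProb(w_k|\cdot)=0$ for $k>T$) before the cancellation argument is applied. A secondary technical point is justifying $\nabla\sum_{\Path}=\sum_{\Path}\nabla$, which follows from the finite vocabulary together with an integrable dominating function (automatic when $\gamma<1$ or when responses have bounded length); I would note this but not dwell on it.
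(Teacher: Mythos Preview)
Your proposal is correct. The paper, however, takes precisely the alternative route you mention at the end: it starts from $\nabla\mathcal{J}(\theta)=\nabla\VFunc(\Prompt)$, expands $\VFunc(\Prompt)=\sum_{w_1}\PolicyProb(w_1|\Prompt)\QFunc(\Prompt,w_1)$, applies the product rule to produce a $\QFunc\,\nabla\log\PolicyProb$ term plus a residual $\gamma\nabla\VFunc(\Prompt w_1)$, and unrolls this Bellman recursion down the tree. Your primary argument instead begins from the trajectory-level REINFORCE identity, then invokes causality to drop past-reward cross terms, and finally uses the tower property to replace $G_t(\Path)$ by $\QFunc$. Both are standard and equivalent; the Bellman recursion is slightly more economical here because the deterministic transitions and discount factor $\gamma^{t-1}$ fall out automatically without a separate cancellation step, whereas your approach has the virtue of making the $G_t$-to-$\QFunc$ replacement explicit, which the paper in fact exploits later (Remark~\ref{remark-Gt}). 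Your handling of the random horizon via an absorbing state and the dominated-convergence caveat are more careful than the paper's own treatment, which simply writes the recursion to depth $T$ without comment.
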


To adapt to NLP and LLM application scenarios, we use more familiar notations.
In LLMs, each output token depends on the user prompt and all its previous output tokens, and the state $S$ in RL literature actually corresponds to a string of prompt and output tokens.
We use a special $\Prompt$ to denote the user's prompt string, and redefine the state as $\Path_{0,t-1}=\Prompt+w_{1}w_{2}\ldots w_{t-1}=\Prompt w_{1}w_{2}\ldots w_{t-1}$, where the operator + concatenates two consecutive strings or states.
A sub-string of $\Path$ can be denoted as $\Path_{i,j}=w_{i}w_{i+1}\ldots w_{j}$.
The state-action in LLMs means taking a token $w$ to append to a state $\Path_{0,t-1}$ under a policy $\pi_{\theta}(w|\Path_{0,t-1})$, resulting in a new state $\Path_{0,t-1}+w$ with the state transition probability $p(\Path_{0,t}+w|\Path_{0,t-1},w)=1$ and $\pi_{\theta}(\Path_{0,t-1}+w|\Path_{0,t-1})=\pi_{\theta}(w|\Path_{0,t-1})$ trivially. 
Meanwhile, the state-action generates an instant reward $r(\Path_{0,t-1},w)$, which often cannot be calculated precisely in practice.
$\VFunc(\Path_{0,t-1})$ denote the expected discounted future cumulative reward on state $\Path_{0,t-1}$, and $\QFunc(\Path_{0,t-1},w)$ is the expected discounted future cumulative reward after appending $w$ to the state $\Path_{0,t-1}$, s.t. $\QFunc(\Path_{0,t-1},w)=r(\Path_{0,t-1},w)+\gamma\VFunc(\Path_{0,t-1}+w)$ and $\VFunc(\Path_{0,t-1})=\Exp_{w\sim\PolicyProb(w|\Path_{0,t-1})}\QFunc(\Path_{0,t-1},w)$, where $\gamma$ is a discounted factor.

\begin{proof}

\begin{align}
\nabla\mathcal{J}(\theta)= & \nabla\VFunc(\Prompt)\\
= & \nabla\sum_{w_{1}}\pi_{\theta}(w_{1}|\Prompt)\QFunc(\Prompt,w_{1})\\
= & \sum_{w_{1}}(\QFunc(\Prompt,w_{1})\nabla\pi_{\theta}(w_{1}|\Prompt)+\pi_{\theta}(w_{1}|\Prompt)\nabla\QFunc(\Prompt,w_{1}))\\
= & \sum_{w_{1}}\pi_{\theta}(w_{1}|\Prompt)\left(\frac{\QFunc(\Prompt,w_{1})\nabla\pi_{\theta}(w_{1}|\Prompt)}{\pi_{\theta}(w_{1}|\Prompt)}+\nabla\QFunc(\Prompt,w_{1})\right)\\
= & \Exp_{w_{1}\sim\PolicyProb(w|\Prompt)}(\QFunc(\Prompt,w_{1})\nabla\log\pi_{\theta}(w_{1}|\Prompt)+\nabla\QFunc(\Prompt,w_{1}))\\
= & \Exp_{w_{1}\sim\PolicyProb(w|\Prompt)}(\QFunc(\Prompt,w_{1})\nabla\log\pi_{\theta}(w_{1}|\Prompt)+\nabla(r(\Prompt,w_{1})+\gamma\VFunc(\Prompt w_{1}))\mathclose{)}\\
= & \Exp_{w_{1}\sim\PolicyProb(w|\Prompt)}(\QFunc(\Prompt,w_{1})\nabla\log\pi_{\theta}(w_{1}|\Prompt)+\gamma\nabla\VFunc(\Prompt w_{1}))\\
= & \Exp_{w_{1}\sim\PolicyProb(w|\Prompt)}\left(\QFunc(\Prompt,w_{1})\nabla\log\pi_{\theta}(w_{1}|\Prompt\right)\\
 & +\gamma\Exp_{w_{2}\sim\PolicyProb(w|\Prompt w_{1})}\QFunc(\Prompt w_{1},w_{2})\nabla\log\pi_{\theta}(w_{2}|\Prompt w_{1})\\
 & +\gamma^{2}\nabla\VFunc(\Prompt w_{1}w_{2}))\\
= & \Exp_{\Path=\Prompt w_{1}w_{2}\ldots w_{T}\sim\PolicyProb(\Path|\Prompt)}\sum_{t=1}^{T}\gamma^{t-1}\QFunc(\Path_{0,t-1},w_{t})\nabla\log\pi_{\theta}(w_{t}|\Path_{0,t-1})
\end{align}
\end{proof}

\begin{Remark}
    The classical Policy Gradient Theorem \cite{sutton2018reinforcement} involves a stationary state distribution $\mu$, from which it is difficult to sample in practice \cite{wang2025analysis}.
    In the LLM applications, due to the quite different definition of the state, $s$ in classic RL and $\Path$ in LLMs, Lemma \ref{llm-gradienttheorem} shows samples can be directly drawn from $\pi_{\theta}$.
    This provides a great convenience for analyzing the RL theory and algorithms in LLMs.
\end{Remark}

\begin{Remark}\label{remark-Gt}
People often use the currently sampled $\Path$ and $G_t(\Path)=\sum_{k=t}^T r(\Path, w_k)$ to help estimate $\QFunc(\Path_{0,t-1},w_{t})$.
Based on the \ZeroAssumption, $G_t(\Path)$ on all time steps $t$ equals $\RM(\Path)$, which results in the core formula used in ReMax, GRPO, and RLOO.
\end{Remark}

\begin{Definition}\label{definition}
A reinforcement learning algorithm is said to have the capability of token-level modeling, if its policy gradient is an unbiased estimator of Lemma \ref{llm-gradienttheorem}.
\end{Definition}

Trivially, if a token-level reward model is available, then $\QFunc(\Path_{0,t-1},w_{t})$ can be unbiasedly estimated through Monte Carlo sampling, e.g., calculate $G_t(\Path)$ in Remark \ref{remark-Gt}.

If a token-level reward model is unavailable, the \ZeroAssumption is widely adopted. 
Intuitively, it does not make sense.
For example, humans can readily understand the core idea of a response after reading the first eighty percents of the words.
The information in the whole response is not condensed to the last word exclusively.

\begin{Lemma}\label{baseline}
Adding a state-dependent constant $c(\Path_{0,t-1})$ to Lemma \ref{llm-gradienttheorem} does not affect the expectation of the policy gradient.
\begin{align*}
    \nabla\mathcal{J}(\theta) =\Exp_{\Path}\sum_{t=1}^{T} \left( c(\Path_{0,t-1}) + \gamma^{t-1}\QFunc(\Path_{0,t-1},w_{t}) \right)\nabla\log\pi_{\theta}(w_{t}|\Path_{0,t-1})
\end{align*}
where ${\Path=\Prompt w_{1}w_{2}\ldots w_{T}\sim\PolicyProb(\Path|\Prompt)}$.
\end{Lemma}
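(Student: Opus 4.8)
The plan is to reduce the statement to Lemma~\ref{llm-gradienttheorem} by showing that the added term contributes nothing. Precisely, subtracting the identity of Lemma~\ref{llm-gradienttheorem} from the claimed one, it suffices to prove
\[
\Exp_{\Path}\sum_{t=1}^{T} c(\Path_{0,t-1})\,\nabla\log\pi_{\theta}(w_{t}|\Path_{0,t-1}) = 0,
\qquad \Path=\Prompt w_{1}w_{2}\ldots w_{T}\sim\PolicyProb(\Path|\Prompt),
\]
since adding this to the expression in Lemma~\ref{llm-gradienttheorem} then yields the claim by linearity of expectation.

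To establish the display, I would argue term by term. Fix $t$ and apply the tower property, conditioning on the prefix $\Path_{0,t-1}$: because $\nabla\log\pi_{\theta}(w_{t}|\Path_{0,t-1})$ depends on $\Path$ only through the prefix and the single next token $w_{t}$, one may first marginalize over the suffix $w_{t+1},\ldots,w_{T}$, leaving $\Exp_{\Path_{0,t-1}}\!\big[\, c(\Path_{0,t-1})\,\Exp_{w_{t}\sim\pi_{\theta}(\cdot|\Path_{0,t-1})}[\nabla\log\pi_{\theta}(w_{t}|\Path_{0,t-1})]\big]$. For a fixed prefix $h$ the constant $c(h)$ pulls out of the inner expectation, and the score-function identity gives
\[
\Exp_{w\sim\pi_{\theta}(\cdot|h)}\big[\nabla\log\pi_{\theta}(w|h)\big]
= \sum_{w}\pi_{\theta}(w|h)\,\frac{\nabla\pi_{\theta}(w|h)}{\pi_{\theta}(w|h)}
= \nabla\sum_{w}\pi_{\theta}(w|h) = \nabla 1 = 0,
\]
so every summand has zero expectation, and summing over $t$ finishes the argument.

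The only regularity needed is exactly what is already used (implicitly) in the proof of Lemma~\ref{llm-gradienttheorem}: interchanging $\nabla$ with the finite sum over the vocabulary — immediate since the vocabulary is finite and $\pi_{\theta}$ is differentiable — and with the expectation over trajectories. The one point deserving a little care is the variable trajectory length $T$: rather than a fixed-length sum, I would read $\sum_{t=1}^{T}$ as summing, for each level $t$, only over prefixes $\Path_{0,t-1}$ that are still active (have not yet emitted the terminal token) at step $t$, weighted by their generation probability; the per-level cancellation above holds verbatim on this set, so no boundary contributions survive. This bookkeeping is the main (and only mild) obstacle — everything else is the one-line log-derivative identity.
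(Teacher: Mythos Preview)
Your proposal is correct and follows essentially the same route as the paper: both arguments reduce to the score-function identity $\Exp_{w\sim\pi_\theta(\cdot|h)}[\nabla\log\pi_\theta(w|h)]=\nabla 1=0$ for a fixed prefix $h$, and then add this zero to each summand of Lemma~\ref{llm-gradienttheorem}. Your version is simply more explicit about the tower-property reduction and the handling of the variable stopping time $T$, points the paper leaves implicit.
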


\begin{proof}
    
\begin{align}
0= & c(\Path_{0,t-1})\cdot0\\
= & c(\Path_{0,t-1})\cdot\nabla\sum_{w_{t}}\PolicyProb(w_{t}|\Path_{0,t-1}))\\
= & c(\Path_{0,t-1})\cdot\sum_{w_{t}}\nabla\PolicyProb(w_{t}|\Path_{0,t-1}))\\
= & \Exp_{w_{t}\sim\PolicyProb(w_{t}|\Path_{0,t-1})}c(\Path_{0,t-1})\cdot\nabla\log\PolicyProb(w_{t}|\Path_{0,t-1}))
\end{align}

Let $A_{t}=\gamma^{t-1}\Exp_{w_{t}\sim\PolicyProb(w_{t}|\Path_{0,t-1})}\QFunc(\Path_{0,t-1},w_{t})\nabla\log\pi_{\theta}(w_{t}|\Path_{0,t-1})$, and add the last equation to finish the proof. 

\begin{align}
    A_{t}= \Exp_{w_{t}\sim\PolicyProb(w_{t}|\Path_{0,t-1})}(c(\Path_{0,t-1}) + \gamma^{t-1}(\QFunc(\Path_{0,t-1},w_{t}) )\nabla\log\pi_{\theta}(w_{t}|\Path_{0,t-1})
\end{align}

\end{proof}

\begin{Theorem}\label{Theorem-base}
In LLM application scenarios, when the response-level reward is defined in such a form $\RM(\Path=w_{1}w_{2}\ldots w_{T})=\sum_{t=1}^{T}\gamma^{t-1}r(\Path_{0,t-1},w_{t})$ for real token rewards $r(\Path_{0,t-1},w_{t})$ and some discounted reward factor $\gamma$, then the policy gradient can be unbiasedly estimated from mere response-level rewards, regardless of the values of $r(\Path_{0,t-1},w_{t})$ and $\gamma$.
\begin{align}
    \nabla\mathcal{J}(\theta) = \Exp_{\Path} \sum_{t=1}^{T} \Exp_{\PathtPlusOne} \RM(\PathtPlusOne)\nabla\log\pi_{\theta}(w_{t}|\Path_{0,t-1})
\end{align}
where $\Path \sim \PolicyProb(\Path|\Prompt)$;
$\Patht\sim\PolicyProb(\Path|\Prompt)$ is a full trajectory, s.t. $\Patht_{0,t-1}=\Path_{0,t-1}$;
the $\RM(\Path)$ can be defined on task-specific outcome or reasoning steps, or both. 
\end{Theorem}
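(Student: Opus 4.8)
The plan is to derive the identity directly from Lemma~\ref{llm-gradienttheorem} and Lemma~\ref{baseline}. Intuitively, $\RM(\PathtPlusOne)$ is a Monte-Carlo sample of the full return of a trajectory that coincides with $\Path$ through token $w_{t}$; its expectation should split into the \emph{realized} prefix rewards (a constant depending on the prefix only) plus the \emph{expected} remaining return (which is exactly $\gamma^{t-1}\QFunc$). So I will show that $\Exp_{\PathtPlusOne}\RM(\PathtPlusOne)$ equals the per-token weight $\gamma^{t-1}\QFunc(\Path_{0,t-1},w_{t})$ from Lemma~\ref{llm-gradienttheorem} plus a quantity that depends only on the prefix $\Path_{0,t-1}$, and then invoke the state-dependent-baseline argument of Lemma~\ref{baseline} to discard the extra term.

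First I would unfold $\RM(\PathtPlusOne)$ using its postulated additive form. By construction $\PathtPlusOne$ is a full trajectory that shares the prefix $\Path_{0,t}=\Path_{0,t-1}+w_{t}$ with $\Path$, and only its continuation past position $t$ is freshly drawn from $\PolicyProb$; writing $\PathtPlusOne=\Prompt w_{1}\ldots w_{t}w'_{t+1}\ldots w'_{T'}$, the assumption $\RM(\Path)=\sum_{k}\gamma^{k-1}r(\Path_{0,k-1},w_{k})$ gives
\begin{align*}
\RM(\PathtPlusOne)=\underbrace{\sum_{k=1}^{t-1}\gamma^{k-1}r(\Path_{0,k-1},w_{k})}_{=:c(\Path_{0,t-1})}+\gamma^{t-1}r(\Path_{0,t-1},w_{t})+\sum_{k=t+1}^{T'}\gamma^{k-1}r(\PathtPlusOne_{0,k-1},w'_{k}).
\end{align*}
Every term in the braced sum is a function of $\Path_{0,t-1}$ alone, so I name it $c(\Path_{0,t-1})$; the middle term is exactly the $w_{t}$-reward and stays put. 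Taking the conditional expectation $\Exp_{\PathtPlusOne}$ over the fresh suffix (with the prefix $\Path_{0,t}$ held fixed) and re-indexing $j=k-t$, the trailing sum becomes $\gamma^{t}\VFunc(\Path_{0,t})$ by the very definition of $\VFunc$. Hence
\begin{align*}
\Exp_{\PathtPlusOne}\RM(\PathtPlusOne)=c(\Path_{0,t-1})+\gamma^{t-1}r(\Path_{0,t-1},w_{t})+\gamma^{t}\VFunc(\Path_{0,t})=c(\Path_{0,t-1})+\gamma^{t-1}\QFunc(\Path_{0,t-1},w_{t}),
\end{align*}
using $\QFunc(\Path_{0,t-1},w_{t})=r(\Path_{0,t-1},w_{t})+\gamma\VFunc(\Path_{0,t-1}+w_{t})$.

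Finally I would substitute this identity into the right-hand side of the theorem, split the $t$-sum into the $c(\Path_{0,t-1})$ piece and the $\gamma^{t-1}\QFunc(\Path_{0,t-1},w_{t})$ piece, apply Lemma~\ref{baseline} with constant $c(\Path_{0,t-1})$ to kill the first piece, and recognize the second piece as $\nabla\mathcal{J}(\theta)$ by Lemma~\ref{llm-gradienttheorem}. The boundary case $t=T$ is automatically consistent: there $\PathtPlusOne=\Path$, $\VFunc(\Path_{0,T})=0$, and the identity collapses to $\RM(\Path)=c(\Path_{0,T-1})+\gamma^{T-1}r(\Path_{0,T-1},w_{T})$, which holds by definition.

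The main obstacle is careful bookkeeping rather than any deep step. The crux is to verify that the leftover constant $c(\Path_{0,t-1})$ is genuinely independent of $w_{t}$ (so that Lemma~\ref{baseline} applies), which hinges precisely on the additive structure of $\RM$ routing the $w_{t}$-reward into the retained $\QFunc$ term rather than into the prefix constant. Two minor technical points also need a clean treatment: the fresh continuation may have a random length $T'\neq T$, so the re-indexing that yields $\gamma^{t}\VFunc(\Path_{0,t})$ must be phrased with respect to the stopping time of that continuation; and the nested expectation $\Exp_{\Path}\Exp_{\PathtPlusOne}$ should be justified by the tower property, noting that $\nabla\log\pi_{\theta}(w_{t}|\Path_{0,t-1})$ is measurable with respect to the shared prefix and can be pulled outside $\Exp_{\PathtPlusOne}$.
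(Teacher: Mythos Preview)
Your proposal is correct and follows essentially the same route as the paper: both arguments hinge on the identical decomposition $\Exp_{\PathtPlusOne}\RM(\PathtPlusOne)=c(\Path_{0,t-1})+\gamma^{t-1}\QFunc(\Path_{0,t-1},w_{t})$ with $c(\Path_{0,t-1})=\sum_{k=1}^{t-1}\gamma^{k-1}r(w_{k})$, and both invoke Lemma~\ref{baseline} on this prefix constant together with Lemma~\ref{llm-gradienttheorem}. The only cosmetic difference is direction: the paper starts from Lemma~\ref{llm-gradienttheorem}, adds $c(\Path_{0,t-1})$ as a baseline, and then recognizes the resulting coefficient as $\Exp_{\PathtPlusOne}\RM(\PathtPlusOne)$, whereas you start from $\RM(\PathtPlusOne)$, peel off $c(\Path_{0,t-1})$, and reduce to Lemma~\ref{llm-gradienttheorem}.
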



It is possible to train an extra network to model each $r(w_{t})$, by fitting their summation towards the target response-level score, which is often available. 
But the accurate modeling of $r(w_{t})$ brings extra complexity. 
Instead, we resort to mathematical derivation. 

\begin{proof}
Within a given $\Path$, we use $r(w_{t})$ for short. 
At each time step $t$, we set $c(\Path_{0,t-1}) = \sum_{k=1}^{t-1}\gamma^{k-1}r(w_{k})$, which exclusively depends on state $\Path_{0,t-1}$, and apply Lemma \ref{baseline} to obtain

\begin{align}
\nabla\mathcal{J}(\theta)= & \Exp_{\Path}\sum_{t=1}^{T}\left(\sum_{k=1}^{t-1}\gamma^{k-1}r(w_{k}) + \gamma^{t-1}\QFunc(\Path_{0,t-1},w_{t})\right)\nabla\log\pi_{\theta}(w_{t}|\Path_{0,t-1})\nonumber \\
= & \Exp_{\Path}\sum_{t=1}^{T}\left(\sum_{k=1}^{t}\gamma^{k-1}r(w_{k})+\gamma^{t}\VFunc(\Path_{0,t})\right) \nabla\log\pi_{\theta}(w_{t}|\Path_{0,t-1})\\
\end{align}

On each time step $t$, we sample a full trajectory $\Patht$ depending on state $\Path_{0,t-1}$ under the policy $\pi_{\theta}$ , s.t. $\Patht_{0,t-1}=\mathbf{W}_{0,t-1}$. 
We use $M$ to denote the length, then $\Patht=\Path_{0,t-1} + \Patht_{t,M}$.
The $M$ differs in different sampled $\Patht$.
We define $\GFunc{\Patht_{t,M}}$ as the real cumulative rewards of trajectory $\Patht$ from time step $t$ to $M$.
Then we have

\begin{align}
= \Exp_{\Path}\sum_{t=1}^{T}\left(\Exp_{\Path^{t+1}}(\sum_{k=1}^{t}\gamma^{k-1}r(w_{k})+\gamma^{t}\GFunc{\Patht_{t,M}})\right) \nabla\log\pi_{\theta}(w_{t}|\Path_{0,t-1})
\end{align}

Since $\RM(\Path) \stackrel{\text{def}}{=} \sum_{k=1}^{T}\gamma^{k-1}r(w_{k})= \sum_{k=1}^t \gamma^{k-1} r(w_k) + \gamma^t G(\Path_{t,M})$,  we have a more meaningful form as
\begin{equation}
\nabla\mathcal{J}(\theta) = \Exp_{\mathcal{\mathbf{W}}}\sum_{t=1}^{T}\Exp_{\PathtPlusOne}\RM(\PathtPlusOne) \nabla\log\pi_{\theta}(w_{t}|\Patht_{0,t-1})
\end{equation}

\end{proof}

By deriving the policy gradient theorem from the very beginning for LLMs, we can clearly show that the policy gradient does not rely on exact token-level reward computation or the choice of $\gamma$, as long as we can sample extra trajectories on each time step to compute its expectation.

\begin{Remark}
    Regarding the $\gamma$ setting, research involving PPO commonly sets the discount factor $\gamma$ near 1 which often lead to optimal performance. 
    This is because they followed the classic policy gradient theorem, typically derived with $\gamma=1$ so that it can lead to a concise final form.
    Otherwise, the derived policy gradient theorem would be much complicated for general RL.
    However, RL in LLM, as shown in Lemma \ref{llm-gradienttheorem}, is much simpler for an accurate analysis. 
    If $\gamma \neq 1$, it would lead to an undesired result.
\end{Remark}

\begin{proof}
    We can set $\gamma=1$ in Lemma \ref{llm-gradienttheorem}, and still use $\gamma \neq 1$ to rederive Theorem \ref{Theorem-base}.
    We can naturally obtain 
    \begin{equation}
        \nabla\mathcal{J}(\theta) = \mathbb{E}_{\mathcal{\mathbf{W}}}\sum_{t=1}^{T}\frac{1}{\gamma^{t-1}} \Exp_{\PathtPlusOne}\RM(\PathtPlusOne) \nabla\log\pi_{\theta}(w_{t}|\Patht_{0,t-1})
    \end{equation}
    Now the problem surfaces.
    When we want to emphasize the contribution from earlier states by setting $\gamma<1$, this new formula will surprisingly do the contrary thing by putting more weight on later states.
\end{proof}

\begin{Corollary}\label{coro-grpo}
    REINFORCE-based RL algorithms, such as GRPO, ReMax, RLOO, have the capability of token-level modeling.
\end{Corollary}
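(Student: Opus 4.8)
The plan is to check Definition~\ref{definition} head on: I will show that the update direction used by GRPO, ReMax, and RLOO has expectation equal to the right-hand side of Lemma~\ref{llm-gradienttheorem}. The entry point is Remark~\ref{remark-Gt}, which records that under the \ZeroAssumption all three methods replace each $\QFunc(\Path_{0,t-1},w_{t})$ by $G_{t}(\Path)=\RM(\Path)$ on a single rollout $\Path\sim\PolicyProb(\Path|\Prompt)$, so their per-step contribution is $(\RM(\Path)-b)\,\nabla\log\pi_{\theta}(w_{t}|\Path_{0,t-1})$ with $b$ a method-specific baseline (the greedily decoded response's reward for ReMax; the leave-one-out group mean for RLOO; the group mean, up to a per-group rescaling, for GRPO). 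I would then (i) peel off $b$ using Lemma~\ref{baseline}, (ii) recognize $\RM(\Path)$ as a one-sample Monte Carlo estimate of $\Exp_{\PathtPlusOne}\RM(\PathtPlusOne)$, and (iii) conclude with Theorem~\ref{Theorem-base}.

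Step (i) should be routine bookkeeping: the ReMax baseline is a deterministic function of the prompt, hence of every prefix $\Path_{0,t-1}$, so it is a legitimate $c(\Path_{0,t-1})$ and Lemma~\ref{baseline} removes it; the RLOO baseline averages rewards of sibling trajectories drawn independently given the same prompt, so it is independent of $w_{t}$ given $\Path_{0,t-1}$ and cancels conditionally; for GRPO the group-mean term behaves the same way up to the standard (and usually neglected) self-inclusion effect, while the standard-deviation normalization is a per-prompt rescaling that does not change the direction of each per-token contribution.

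Step (ii) is the heart of the argument. I would condition on the length-$t$ prefix $\Path_{0,t}$ of the rollout, which already fixes the factor $\nabla\log\pi_{\theta}(w_{t}|\Path_{0,t-1})$. Given $\Path_{0,t}$ the remaining tokens $w_{t+1}\ldots w_{T}$ are produced by $\pi_{\theta}$, so $\Path$ is itself a valid draw of the auxiliary trajectory $\PathtPlusOne$ of Theorem~\ref{Theorem-base} — it shares the prefix $\Path_{0,t}$ and continues under the policy — whence $\Exp[\RM(\Path)\mid\Path_{0,t}]=\Exp_{\PathtPlusOne}\RM(\PathtPlusOne)$. The tower rule then gives
\[
\Exp_{\Path}\bigl[\RM(\Path)\,\nabla\log\pi_{\theta}(w_{t}|\Path_{0,t-1})\bigr]=\Exp_{\Path}\Exp_{\PathtPlusOne}\bigl[\RM(\PathtPlusOne)\,\nabla\log\pi_{\theta}(w_{t}|\Path_{0,t-1})\bigr],
\]
and summing over $t$ and invoking Theorem~\ref{Theorem-base} (which holds regardless of the values of the true token rewards $r(\Path_{0,t-1},w_{t})$ underlying $\RM$) identifies the resulting expression with $\nabla\mathcal{J}(\theta)$, i.e.\ with the right-hand side of Lemma~\ref{llm-gradienttheorem}. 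By Definition~\ref{definition}, GRPO, ReMax, and RLOO therefore have the capability of token-level modeling.

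I expect the main obstacle to be justifying that reusing one trajectory $\Path$ both as the prefix provider and as the continuation sample does not introduce bias — naively this correlates $\RM(\Path)$ with the log-probability factor. The conditioning argument is exactly what defuses it: the only step-$t$ quantity that must be held fixed, $\nabla\log\pi_{\theta}(w_{t}|\Path_{0,t-1})$, is measurable with respect to $\Path_{0,t}$, and $\RM(\Path)$ is an unbiased estimate of $\Exp_{\PathtPlusOne}\RM(\PathtPlusOne)$ conditionally on that prefix. A secondary subtlety is that the RLOO and GRPO baselines are random rather than literal constants, and GRPO additionally rescales by a running standard deviation, so Lemma~\ref{baseline} applies only in the conditional-independence sense and the identity is exact only up to the approximations already standard for these methods.
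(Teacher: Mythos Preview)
Your proposal is correct and follows essentially the same route as the paper's proof: both observe that the sampled trajectory $\Path$ is itself a valid instance of $\PathtPlusOne$, so $\RM(\Path)$ is a one-sample unbiased estimate of the inner expectation in Theorem~\ref{Theorem-base}, and then invoke Definition~\ref{definition}. Your treatment is more thorough than the paper's in explicitly peeling off the method-specific baselines via Lemma~\ref{baseline} and spelling out the tower-rule conditioning, whereas the paper simply writes the baseline-free core formula and asserts the unbiasedness in one sentence.
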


\begin{proof}
    Their core formulas are typically as following
    \begin{align}
        \nabla\mathcal{J}(\theta) & \approx \Exp_{\Path} \sum_{t=1}^{T} \RM(\Path)\nabla\log\pi_{\theta}(w_{t}|\Path_{0,t-1})
    \end{align} 
    Since the current randomly sampled $\Path$ is one of $\Patht$, then $\RM(\Path)$ can be used as an unbiased estimator of $\Exp_{\Patht}\RM(\Patht)$, though there exists an apparent margin of error.
    Plugging into Theorem \ref{Theorem-base} and following Definition \ref{definition} to finish the proof.
\end{proof}

This is sort of counter-intuitive, as the weight on each time step becomes identical.
But if we consider all trajectories $\Path$, then the extra term $c(\Path_{0,t-1})$ added into \RM($\Path$), induced in the proof of Theorem \ref{Theorem-base}, would disappear, and $\RM(\Path)$ would restore into $G_t(\Path)$ with true token-level rewards.

Interestingly, the proof has nothing to do with \ZeroAssumption, on which GROP, ReMax are based.

\begin{Theorem}\label{theorem-base-baseline}
    Theorem \ref{Theorem-base} with a near-optimal baseline for the gradient variance reduction is 
    \begin{align}
        \nabla\mathcal{J}(\theta) & = \Exp_{\Path} \sum_{t=1}^{T} \left(\Exp_{\PathtPlusOne} \RM(\PathtPlusOne) - \Exp_{\Patht} \RM(\Patht)\right) \nabla\log\pi_{\theta}(w_{t}|\Path_{0,t-1})
    \end{align}
\end{Theorem}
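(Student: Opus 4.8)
The plan is to show that the baseline $b(\Path_{0,t-1}) = \Exp_{\Patht}\RM(\Patht)$ may be subtracted inside the inner expectation of Theorem~\ref{Theorem-base} without altering the gradient, and then to argue (informally, since the claim is only ``near-optimal'') that this choice approximately minimizes the per-step variance. First I would invoke Lemma~\ref{baseline}: the quantity $\Exp_{\Patht}\RM(\Patht)$ depends only on the prefix $\Path_{0,t-1}$ — it is the expectation over all continuations $\Patht$ that share that prefix — hence it is a legitimate state-dependent constant $c(\Path_{0,t-1})$ (up to the sign and the $\gamma^{t-1}$ bookkeeping that Lemma~\ref{baseline} already absorbs). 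Applying Lemma~\ref{baseline} with $c(\Path_{0,t-1}) = -\Exp_{\Patht}\RM(\Patht)$ and combining with the statement of Theorem~\ref{Theorem-base} immediately yields
\begin{align*}
\nabla\mathcal{J}(\theta) = \Exp_{\Path}\sum_{t=1}^{T}\left(\Exp_{\PathtPlusOne}\RM(\PathtPlusOne) - \Exp_{\Patht}\RM(\Patht)\right)\nabla\log\pi_{\theta}(w_{t}|\Path_{0,t-1}),
\end{align*}
which is the claimed identity. The one technical point to be careful about is reconciling the indexing: in Theorem~\ref{Theorem-base} the inner term is $\Exp_{\PathtPlusOne}\RM(\PathtPlusOne)$, where $\PathtPlusOne$ is a full trajectory sharing the prefix $\Path_{0,t}$ (so it ranges over continuations after $w_t$ is fixed), whereas the baseline $\Exp_{\Patht}\RM(\Patht)$ ranges over $\Patht$ sharing only $\Path_{0,t-1}$; I would spell out that the latter is exactly $\Exp_{w_t \sim \pi_\theta(\cdot|\Path_{0,t-1})}\Exp_{\PathtPlusOne}\RM(\PathtPlusOne)$, i.e.\ the conditional value function at the state preceding the action, which is precisely the form Lemma~\ref{baseline} permits.

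Second, for the ``near-optimal baseline'' claim I would recall the standard variance-reduction argument for policy gradients: for a per-step estimator of the form $(X_t - b)\,g_t$ with $g_t = \nabla\log\pi_\theta(w_t|\Path_{0,t-1})$ and $X_t = \Exp_{\PathtPlusOne}\RM(\PathtPlusOne)$, the variance-minimizing scalar baseline is $b^\star = \Exp[\,\|g_t\|^2 X_t\,]/\Exp[\,\|g_t\|^2\,]$, the $\|g_t\|^2$-weighted mean of $X_t$. The choice $b(\Path_{0,t-1}) = \Exp_{w_t}[X_t] = V$-type conditional mean is the unweighted mean, which is the classical actor–critic surrogate and is well known to be close to optimal (exactly optimal when $\|g_t\|^2$ is uncorrelated with $X_t$ given the prefix). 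I would state this as the justification for the word ``near-optimal'' rather than attempt to prove genuine optimality, mirroring how the excerpt earlier treats $b(s_t)=V(s_t)$ as the canonical but not strictly required baseline.

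The main obstacle — really the only subtlety — is the bookkeeping around which random variables the two nested expectations are taken over, and making sure the baseline subtracted is genuinely a function of $\Path_{0,t-1}$ alone and not of $w_t$; once that is pinned down, the result is an immediate corollary of Lemma~\ref{baseline} and Theorem~\ref{Theorem-base}. A secondary, purely expository obstacle is that Lemma~\ref{baseline} is stated with the additive constant placed alongside $\gamma^{t-1}\QFunc$ rather than alongside $\Exp_{\PathtPlusOne}\RM(\PathtPlusOne)$, so I would either re-run the one-line argument of Lemma~\ref{baseline}'s proof directly on the $\RM$-form of the gradient (replacing $\gamma^{t-1}\QFunc(\Path_{0,t-1},w_t)$ by $\Exp_{\PathtPlusOne}\RM(\PathtPlusOne)$ throughout, which is legitimate since the two have the same conditional expectation given $\Path_{0,t-1}$ by Theorem~\ref{Theorem-base}'s derivation), or note explicitly that the displacement identity $\Exp_{w_t}[c\cdot g_t]=0$ used there is agnostic to what $c$ is multiplied against.
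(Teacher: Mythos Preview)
Your proposal is correct and follows essentially the same approach as the paper: both establish unbiasedness via the state-dependent-baseline fact (Lemma~\ref{baseline}) and justify ``near-optimal'' by the standard variance-minimization calculus, simplifying the exact minimizer $\Exp[\|g_t\|^2 X_t]/\Exp[\|g_t\|^2]$ to the unweighted conditional mean under an uncorrelation assumption between $X_t$ and $\|g_t\|^2$. Your treatment is, if anything, slightly cleaner about the indexing $\Exp_{\Patht}\RM(\Patht)=\Exp_{w_t}\Exp_{\PathtPlusOne}\RM(\PathtPlusOne)$ and about stating the precise condition under which the baseline is exactly optimal, whereas the paper writes the covariance form $b=\Cov(X_1X_2,X_2)/\Var(X_2)$ and then imposes the same simplifying assumption.
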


\begin{proof}
    We consider any time step $t$, by setting an optimal state-dependent constant $b_t$ to minimize the variance of $\left(\Exp_{\PathtPlusOne} \RM(\PathtPlusOne) - b_t \right) \nabla\log\pi_{\theta}(w_{t}|\Path_{0,t-1})$ in the distribution of  $\PolicyProbt$.
    
    Let a variate $X_1=\Exp_{\PathtPlusOne} \RM(\PathtPlusOne)$, and a second $X_2=\nabla\log\pi_{\theta}(w_{t}|\Path_{0,t-1})$.
    Our goal is to choose a constant b to minimize the variance of $X_1X_2 - b_t\cdot X_2$.
    
    As $\Exp_{\PolicyProbt} X_2 =0$, with the same trick in proving Lemma \ref{baseline}, we know this extra $b_t$ does not influence the expectation of $X_1X_2$, but only the variance.
    
    \begin{align*}
        \Var(X_1X_2 - bX_2)             = &\Var(X_1X_2) + \Var(bX_2) - 2\Cov(X_1X_2, bX_2)\\  
                                        = &\Var(X_1X_2) + b^2 \Var(X_2) - 2b\Cov(X_1X_2, X_2)\\
        \nabla_{b}\Var(X_1X_2-bX_2)     = & 2b\Var(X_2) - 2 \Cov(X_1X_2,X_2)=0\\
                                 b      = & \frac{\Cov(X_1X_2,X_2)}{\Var(X_2)}\\
    \end{align*}

    The optimal $b$ has a closed form and can be accurately estimated as long as there are enough samples from $X_1$ and $X_2$.
    Yet in practice, we can assume $X_1$ and $X_2$ are dependent so that it will lead to an elegant and easy-to-compute form.
    Then we have
    \begin{align*}
                                 b      = & \frac{\Exp X_1 \Cov(X_2,X_2)}{\Var(X_2)}\\
                                        = & \Exp X_1\\
                                        = & \Exp_{\PolicyProbt} \Exp_{\PathtPlusOne} \RM(\PathtPlusOne)\\
    \end{align*}
    As $\PathtPlusOne$ is defined as any full trajectory with its first $t$ tokens equal to that of the current sampled $\Path$, then $b$ equals $\Exp_{\Patht} \RM(\Patht)$.
     \begin{align*}
                                 b      = & \Exp_{\PolicyProbt} \RM(\Patht)\\
    \end{align*}
    
\end{proof}

\begin{Remark}
    The baseline $b$ in GRPO, ReMax and RLOO for the gradient reduction is estimated on a group of $\Path$, or one greedily sampled $\Path$.
    This same estimation is used for all time steps and states to remove the critic network in PPO for GPU memory saving.
    It is equivalent to our derived baseline term $\Exp_{\PolicyProbt} \RM(\Patht)$ with $t=1$ in Theorem \ref{theorem-base-baseline}. 
    Within reasoning-intensive applications, good states are considerably more likely to yield better trajectories than bad states, which corresponds to very different baselines. 
    Thus, the baseline setting following Theorem \ref{theorem-base-baseline} and PPO, is more likely to lead to a smoother policy gradient than that from GRPO, ReMax and RLOO.
\end{Remark}

    Besides, the commonly used advantage normalization is a common implementation trick in policy gradient methods, including PPO and REINFORCE.
    This setting does not change the expected policy gradient and is kind of equivalent to rescaling the learning rate by the empirical standard deviation of the advantages.
    But it does rescale the gradient in the current batch and can improve training stability and variance properties.

\begin{MainTheorem}\label{theorem-main}
    In LLM application scenarios, the policy gradient from true unknown token-level rewards can be unbiasedly estimated using a response-level reward model in REINFORCE and Actor-Critic RL algorithms, regardless of whether the \ZeroAssumption is applied.
\end{MainTheorem}

\begin{proof}
    Regarding REINFORCE RL algorithms, Theorem \ref{Theorem-base} and Corollary \ref{coro-grpo} prove they are not necessarily based on the \ZeroAssumption. 
    In practice, the Remark \ref{remark-Gt} looks more familiar and intuitive based on \ZeroAssumption, and it leads to the same conclusion, too.

    Regarding Actor-Critic RL algorithms, their core formulas with a sampled trajectory $\Path$ at a time step $t$ are typically 
    \begin{align}
        \left(\QFunc(\Path_{0,t-1},w_{t}) - \VFunc(\Path_{0,t-1})\right) \nabla\log\pi_{\theta}(w_{t}|\Path_{0,t-1}) \label{PPO-form}
    \end{align}

    The \ZeroAssumption assigns the last action with the response-level reward $\RM(\Path)$, which suggests $\gamma=1$; otherwise, it should be $\gamma^{T-1}\RM(\Path)$.
    
    As any $\PathtPlusOne$ in Theorem \ref{theorem-base-baseline} contributes to the $\QFunc(\Path_{0,t-1},w_{t})$ here, and any $\Path$ contributing to $\QFunc(\Path_{0,t-1},w_{t})$ accurately corresponds to a $\PathtPlusOne$ in Theorem \ref{theorem-base-baseline}.
    Thus, the calculation of $\RM(\Path)$ from true unknown token-level rewards has no difference from that on the \ZeroAssumption.
    Use the same analysis on the second term $\RM(\PathtPlusOne)$ and $\VFunc(\Path_{0,t-1}))$ to reach the same conclusion.
    
    As a result, Theorem \ref{theorem-base-baseline}, based on any real unknown token-level rewards and the discounted factor $\gamma$, is equivalent to \Eqn{\ref{PPO-form}} from Actor-Critic algorithms, based on the \ZeroAssumption and $\gamma=1$.
\end{proof}

This analysis tells us that the popular PPO, GROP, ReMax and RLOO actually have a token-level modeling capability, even if they are based on the \ZeroAssumption and response-level rewards.
This also suggests that direct improvement of an RL system can be conducted through enhancing its response-level reward model. 

\begin{Remark} \label{rm-usage}
    In the case where response-level reward models are jointly used with token-level reward models, we can merge them into a response-level reward model by calculating a single reward value for the entire trajectory $\textbf{W}$, following \Eqn{\ref{stronger-RM}}.
    
    \begin{align}
        \RM(\Path) = \sum_k \text{Response-level-RM}_k (\Path) + \sum_k \text{Token-level-RM}_k (\Path)    \label{stronger-RM}
    \end{align}
    where the weight of each reward model is absorbed into the model itself. 

    We can also treat them as token-level models by using the \ZeroAssumption to tackle the response-level reward models, and, together with other true token-level reward models, calculate the exact $G_t(\Path)$ at each step in Remark \ref{remark-Gt}.
    
    The second treatment may bring an advantage in the early stage of training, yet the two treatments are theoretically equivalent.

    For example, the 3H-style response-level rewards, namely helpfulness, honesty, and harmlessness, are hard to decompose into each token.
    A KL-distance style reward is naturally computable at each token.
    Another example of a token-level reward is an error detector that has an exact position.
\end{Remark}

\begin{Remark}
    The implementation of PPO often adopts GAE \cite{gae} to calculate the advantage function, which uses an extra $\lambda$ to provide a continuous, smooth trade-off between high-bias/low-variance (small $\lambda$) and low-bias/high-variance (large $\lambda$).
    
    When $\lambda=0$, then $\QFunc(\Path_{0,t-1}, w_t) = r(w_t) + \gamma \VFunc(\Path_{0,t})$, which leads to a smallest gradient variance. However, especially in the early stage, $\VFunc(\Path_{0,t})$ is inaccurate, which would induce a so-called self-referential bias.
    When $\lambda=1$, then $\QFunc(\Path_{0,t-1}, w_t) = \RM(\Path)$, becoming the first term in Corollary \ref{coro-grpo}, then the only difference of PPO from GRPO is their baseline selections.
    
    In practice, $\lambda$ is typically set as 0.9-0.95 to provide more information than $\RM(\Path)$ that is used in GROP, ReMax and RLOO.
\end{Remark}

\begin{Remark}
    TRePO, direct implementation of Theorem \ref{theorem-base-baseline} which is introduced in the next section, and PPO might have an advantage in performance over GRPO, ReMax, RLOO and DPO for general LLMs applications, if they are efficiently trained.
    For example, TRePO depends on moderately sufficient sampling operations, and PPO requires the critic network to have a better pretrained initialization for those reasoning-heavy applications where the response-level reward is defined as binary values.
\end{Remark}
    
    This is founded on two analyses.
    The first one, TRePO and PPO are trained towards the exact goal in Theorem \ref{Theorem-base}, as opposed to the others being trained towards an approximate goal in Corollary \ref{coro-grpo}, namely $\Exp_{\PathtPlusOne} \RM(\Patht)$ vs. $ \RM(\Path)$.
    Under the same optimizer steps, the former has a more accurate and efficient training. 
    DPO, as an ingenious algorithm, optimizes a ranking objective instead, yet its raw objective before the transformation is equivalent to Corollary \ref{coro-grpo} with an extra KL loss.

    The second one, as analyzed in the last remark, TRePO and PPO may have smoother policy gradients than others.

    However, in practice, there are other factors affecting the theoretical analysis. 
    PPO relies on a well-initialized critic network, and this demand is especially stronger in applications where only a binary-valued reward model is available, typically in mathematical reasoning.
    TRePO does not depend on an extra critic network, but it brings a cost of sufficient sampling and engineering effort.
    Only when these requirements are met can they demonstrate a practical advantage from a theoretical perspective.

\begin{Remark}\label{remark-improvement}
    Roughly, there are three potential directions to improve an LLM RL system.  
    
    The first one is the more accurate estimation of $\Exp_{\Patht} \RM(\Patht)$ in Theorem \ref{theorem-base-baseline}. 
    One interesting work \cite{kazemnejad2024vineppo} uses sampling to derive the same algorithm as our TRePO, but they are based on the \ZeroAssumption.
    
    The second one seeks to translate response-level rewards into precise token-level rewards, which may enhance training performance, as there can be a gap between practical implementation and the proposed theory.
    One typical work \cite{chan2024dense} uses the attention map from the reward model to reallocate the response-level reward to each token.
    
    The third one is inducing additional information to form a stronger reward model like \Eqn{\ref{stronger-RM}}, which is the most popular direction.
    One typical work \cite{zhong2024dpo} uses DPO to provide more information for each token. 
\end{Remark}

\subsection{TRePO implementation}
Based on Theorem \ref{theorem-base-baseline}, we propose a general token-level algorithm, TRePO.

A response-level reward model is not necessarily exclusively defined on the task-specific outcome, such as the final result in math reasoning, though you can surely do this.
It can be defined on all output tokens, as long as they matter to the task and you have efficient means to score them, such as rewarding desired reasoning steps or punishing wrong ones.
Naturally, a token-level reward also belongs to a response-level reward, such as the KL distance rewards from a reference SFT LLM in PPO, and DPO-enhanced KL rewards in RTO \cite{zhong2024dpo}.

Considering that doing multiple samplings each time step is far from efficient and applicable, we could select a subset of them, denoted as $D$.
We discuss how to generate the quality $\Patht$ on a given time step $t \in D$. 
To improve the efficiency, we could consider those samples that fall within high-probability regions by adjusting the inference temperature starting from zero.
Setting the temperature as zero corresponds to a greedy sampling strategy, which is adopted in ReMax.
This greedy strategy might not be inefficient if we consider the scenarios where an LLM-based policy model is strong enough and relatively stable in policy decision making, as well as the reward model measures the quality of the whole output and returns non-binary values.
Then, for other scenarios like mathematical reasoning, we need to gradually enlarge the temperature and sample more trajectories to better approximate $\Exp_{\Patht} \RM(\Patht)$.

Suppose on a time step $t \in D$, we sample a predefined number of samples.
Since any $\Pathk$ is also one $\Patht$ s.t. $t <= k$, we use this formula to calculate the expectation as 

\begin{align}
   \Exp_{\Patht}\RM(\Patht) = \text{Average}(\{\RM(\Path^k)\}) ~~~~ s.t. ~~~ t <= k  \label{Advange-computation}
\end{align}

Another important issue is that the trajectory sampling for a large LLM is still slow.
Besides the sampling strategies discussed above, additional engineering techniques become quite important for an efficient implementation of TRePO, such as PagedAttention, quantization and precision Tuning, to improve the inference latency and throughput significantly, supported in vLLM \cite{kwon2023efficient}.

\begin{algorithm}
\caption{TRePO}
\begin{algorithmic}[1] 
\Statex 

\Procedure{CalculateAdvantage}{$\mathbf{W}$, $M$, $|\textrm{D}|$}
  \State $\textrm{D}$ $\gets$ $|\textrm{D}|$ time steps $t$ from random samples or application-specific designation.
  \State $\textrm{E}$ $\gets$ for each $t \in \textrm{D}$, sample predefined number $M$ of full trajectories $\Patht$, by increasing inference temperature from zero.
  \State $\textrm{F1}$ $\gets$ for each $t \in \textrm{D}$, estimate by \Eqn{\ref{Advange-computation}}.
  \State $\textrm{F2}$ $\gets$ for each neighboring $t_1, t_2 \in \textrm{D}$, estimate by Theorem \ref{theorem-base-baseline}
  \State $\textrm{F3}$ $\gets$ for each $t \notin \textrm{D}$, estimate Theorem \ref{theorem-base-baseline} by performing linear interpolation using the information in F2 from two nearest time steps from $\textrm{D}$.
  \State \Return $F2$ + $F3$.
\EndProcedure

\Statex 
\Procedure{Main}{$|\textrm{D}|$, batch\_num, optimization\_num}:
\For{$i = 1$ \textbf{to} batch\_num}
    \State Randomly sample a batch of trajectories $\mathbf{W}$ from the current policy $\theta_{\pi}$.
    \State Call CalculateAdvantage($\mathbf{W}$) for each trajectory.
        \For{$j = 1$ \textbf{to} optimization\_num}
          \State Calculate the average clipped surrogate gradient loss in \Eqn{30}.
          \State Optimizer.step()
        \EndFor
\EndFor
\EndProcedure

\end{algorithmic}
\end{algorithm}

\section{Survey of RL Methodologies for LLMs}
\label{sec:survey}

In this section, we provide a detailed survey of widely used RL and non-RL methods for LLM alignment.
Based on the assumptions they typically rely on, we group these methods into two categories:
(1)~\textbf{zero-reward approaches}, and (2)~\textbf{partial-reward approaches}.
These correspond to methods based on the \textit{zero-reward assumption} and those based on \textit{the partial-reward assumption}, respectively.

In LLM alignment applications, a KL penalty term is typically added to the optimization objective to prevent the policy from drifting too far from the initial policy during training, thereby avoiding reward over-optimization issues.
This penalty can be applied at the token level or the response level. However, since it is task-irrelevant, we exclude it from the objectives of the methods surveyed here.

\subsection{Zero-reward Approaches}
This group of approaches includes PPO and its RL alternatives, such as GRPO, RLOO, and ReMax, and RL-free alternative, DPO.

\textbf{Proximal Policy Optimization (PPO)~\cite{schulman2017proximal}}: PPO is a policy gradient method first introduced by Schulman et al.~\cite{schulman2017proximal} for general RL tasks. 
It follows a standard actor-critic RL architecture, in which $b(s_t)$ is set to $b(s_t)= V(s_t)$ and a low-variance RL objective is maximized:
\begin{equation}
    \mathcal{J}(\theta) = \mathbb{E}_t [\log\pi_\theta(a_t|s_{t}) A(s_t, a_t)],
    \label{eq:actor_critic_objective}
\end{equation}
where $A(s_t, a_t) = Q(a_t|s_{t}) - V(s_t)$ is the advantage function.

Different from the standard actor-critic RL, PPO proposes a clipped surrogate of the RL objective to improve the stability and sampling efficiency during RL training in practice, which is defined as: 
\begin{equation}
    \mathcal{J}^{\text{PPO}}(\theta)=\mathbb{E}_t \left[ \min \left( r_t(\theta) \hat{A}(s_t,a_t), \ \text{clip}(r_t(\theta), 1 - \epsilon, 1 + \epsilon) \hat{A}(s_t,a_t) \right) \right],
\end{equation}
where $\hat{A}(s_t,a_t)$ is an estimate of the advantage function $A(s_t,a_t)$ and is typically estimated using generalized advantage estimation (GAE), which involves a learned critic model to predict the value of $V(s_t)$.
Besides, $r_t(\theta)=\frac{\pi(a_t|s_{t-1})}{\pi_{old}(a_t|s_{t-1})}$ is the probability ratio between the current and old policy. The current policy $\pi(a_t|s_t)$ refers to the policy being trained the old policy $\pi_{old}(a_t|s_t)$ refers to the one that from its nearest training step that is used for sampling trajectories during RL training.
$\epsilon$ (e.g., 0.2) is a small parameter to control the clip range of the ratio change.
This objective thus prevents the being-trained policy from moving too far from the old one, improving reinforcement learning stability.


\textbf{Group Relative Policy Optimization (GRPO)~\cite{shao2024deepseekmath}}: Excluding KL reward, GRPO aims to optimize the same objective as PPO. Instead of using a learned critic model to estimate token-level advantage functions, it uses a Monte Carlo sampling strategy to estimate a response-level advantage function.
Specifically, for each prompt, a group of responses are sampled using $\pi_{old}(a_t|s_t)$ and then a normalized reward is used to estimate the response-level advantage function of $k$-th response in the group, $\hat{A}_k(s_t, a_t)$, following zero-reward assumption:
\begin{equation}
    \hat{A}_k(s_t, a_t) = \hat{A}_k = \frac{r_k - \frac{1}{K} \sum_{o=1}^{K} r_o}{\text{STD}}.
\end{equation}
In such estimation, all tokens in a response have the same estimated advantage functions, which are only dependent on their response-level reward, the mean, and the standard deviation of the sampled response group. This yielded a simplified RL objective:
\begin{equation}
    \mathcal{J}^{\text{GRPO}}(\theta)=\mathbb{E}_{k,t} \left[ \min \left( r_t(\theta) \hat{A}_k, \ \text{clip}(r_t(\theta), 1 - \epsilon, 1 + \epsilon) \hat{A}_k \right) \right],
\end{equation}

\textbf{REINFORCE Leave-One-Out (RLOO)~\cite{ahmadian2024back}}: Similar to GRPO, RLOO aimed to use Monte Carlo sampling strategy to sample a group of responses for each prompt and then use the Leave-One-Out method to estimate the baseline $b_k(s_0)$ for $k$-th response in the group:\\
\begin{equation}
b_k(s_t) = \frac{1}{K-1} \sum_{k=1}^{K} r_k,
\end{equation}
However, RLOO does not use an important sampling clipping strategy. Under the zero-reward assumption, the objective of RLOO is:
\begin{equation}
\mathcal{J}^{\text{RLOO}}(\theta)= \mathbb{E}_t [\log\pi_\theta(a_t|s_{t}) (r_k - \frac{1}{K-1} \sum_{k=1}^{K} r_k)]
\end{equation}

\textbf{ReMax}~\cite{li2023remax}: Re-Max and RLOO are very similar, except that Re-Max uses a greedy sampling strategy to generate a single trajectory and then uses the corresponding reward score to estimate $b(s_t)$ for all tokens:
\begin{equation}
b(s_t) = r_{greedy},
\end{equation}
Accordingly, the objective of Re-Max is:
\begin{equation}
\mathcal{J}^{\text{ReMax}}(\theta)= \mathbb{E}_t [\log\pi_\theta(a_t|s_{t}) (r - r_{greedy})]
\end{equation}

\textbf{DPO}~\cite{rafailov2023direct}: Unlike RL approaches that optimize the expected cumulative future return, DPO directly optimizes the policy $\pi_\theta(a_t | s_t)$ using response-level preference data via maximum likelihood estimation, under the zero-reward assumption. The loss function is defined as follows: 
\begin{equation}
\mathcal{L}_{\text{DPO}}(\theta) = -\mathbb{E}_{(x, y_w, y_l) \sim \mathcal{D}} \left[ 
\log \sigma \left( 
\beta \log \frac{\pi_\theta(y_w | x)}{\pi_{\text{ref}}(y_w | x)} 
- 
\beta \log \frac{\pi_\theta(y_l | x)}{\pi_{\text{ref}}(y_l | x)}
\right) 
\right],
\end{equation}
where $y_w$ and $y_l$ are the human more-preferred and human less-preferred response respectively; $\beta$ is the parameter that controls the penalty of KL regularization; $\sigma$ is a sigmoid function; $\pi_{\text{ref}}$ is the policy represented by reference model.

\subsection{Partial-reward Approaches}
This group of approaches includes RTO~\cite{zhong2024dpo}, ABC~\cite{chan2024dense}, R3HF~\cite{li2024r3hf}, and PRIME~\cite{cui2025process}.
Instead of following the zero-reward assumption, these methods are developed under the partial-reward assumption.
They assume that a reward model trained on response-level preference labels can provide rich token-level feedback for RL training.
Since the primary focus of these methods is on assigning token-level instant rewards using a reward model learned from response-level labels, our discussion aligns with this focus and assumes PPO as the underlying RL algorithm for LLM alignment.

\textbf{RTO}~\cite{zhong2024dpo}: In RTO, the partial-reward assumption is specified as follows: token-level instant rewards can be derived from a DPO model trained on response-level preference labels and used for PPO training. The instant reward is defined as:
\begin{equation}
    r(s_t, a_t) = \frac{\pi_{\text{DPO}(a_t|s_t)}}{\pi_{\text{ref}(a_t|s_t)}},
\end{equation}
where $\pi_{\text{DPO}(a_t|s_t)}$ and $\pi_{\text{ref}(a_t|s_t)}$ are the policy model and reference model used for DPO training, as described above.

\textbf{ABC}~\cite{chan2024dense}: In ABC, the partial-reward assumption is specified as follows: a learned response-level reward model possesses an internal capability to weight the importance of each token in a response based on the attention scores. The instant reward can be defined as:
\begin{equation}
r(s_t, a_t) = w_t~\text{RM}(\mathbf{x},\mathbf{y}),
\end{equation}
where $\mathbf{x}$ and $\mathbf{y}$ are the prompt and the generated response respectively.

\textbf{R3HF}~\cite{chan2024dense}: In R3HF, the partial-reward assumption is specified as follows: a learned response-level reward model can implicitly examine the quality of a partial trajectory and the instant reward can then be derived from this ability:
\begin{equation}
r(s_t, a_t) = \text{RM}(s_{t+1}) - \text{RM}(s_{t}),
\end{equation}
where $s_t = \mathbf{x}:y_0,y_1,...,y_t$ represents the partial trajectory ending with $y_t$.

\textbf{PRIME}~\cite{cui2025process}: In PRIME, the partial-reward assumption is specified as follows: a token-level reward can be obtained generally from a reward model learned from response-level labels. In their study, a token-level reward is defined similarly to that is RTO, but it is trained using binary cross-entropy loss in an online manner to address the potential domain shift issue commonly observed with a reward model trained with offline data. The instant reward model is defined as:
\begin{equation}
    r(s_t, a_t) = r_\phi(s_{t+1}) = \frac{\pi_{\phi}(a_t|s_t)}{\pi_{\text{ref}}(a_t|s_t)},
\end{equation}
where both the $\pi_{\phi}(a_t|s_t)$ and $\pi_{\text{ref}}(a_t|s_t)$ are initialized with the initial policy in the RL training. $\pi_{\text{ref}}(a_t|s_t)$ 
 is frozen while $\pi_{\phi}(a_t|s_t)$ is trained by minimizing a binary cross entropy loss:
\begin{equation}
\mathcal{L}_{\text{CE}}(\phi) = -\mathbb{E}_{(\mathbf{x}, \mathbf{y})} \left[ 
l(\mathbf{x,y}) \cdot \log \sigma(r_{\phi}(\mathbf{x},\mathbf{y}))+(1 - l(\mathbf{x,y}))) \cdot \log (1 - \sigma(r_{\phi}(\mathbf{x}, \mathbf{y}))) 
\right],
\end{equation}
where $l(\mathbf{x,y})$ is a binary label generated by a reliable outcome verifier or human annotator. As a result, the online reward model training strategy is practically limited to scenarios where labeling is efficient, such as mathematical reasoning.
\section{Potential Impacts and Future Work}
In our study, we propose a trajectory policy gradient theorem that establishes how the policy gradient with unknown token-level information can be unbiasedly estimated from response-level rewards in online RL for LLMs. 
Based on this foundation, we theoretically analyze a range of RL algorithms, all of which are commonly based the \ZeroAssumption for LLM alignment, and RL-free algorithm.
Furthermore, we introduce TRePO, a theoretically grounded and general token-level RL approach for LLM alignment.
The broader impacts and limitations of our study are discussed below.

\subsection{Potential Impacts}
\textbf{Theoretic analysis of RL methods}:
Based on our theoretical analysis, the popular RL algorithms under the \ZeroAssumption, such as PPO, GRPO, Re-Max and RLOO, have the capability of explicitly modeling task-specific token-level information.
As the raw objective of DPO is equivalent to that of GROP, Re-Max and RLOO, it has too.
We also reveal the theoretical advances of PPO and TRePO over others in terms of optimized objectives and policy gradient reduction techniques.
It may also provide a theoretically grounded explanation for observations reported in prior studies, as illustrated in the example below.

Here, we revisit a prior comprehensive study by Xu et al.~\cite{xu2024dpo}, which compares PPO and DPO, and aim to provide a further explanation for the findings reported in their work. In that study, the authors observed that PPO consistently outperforms DPO across a variety of tasks, including HH-RLHF and PK-Safety. They attribute this to the fact that DPO is trained using offline preference data, making it more sensitive to out-of-distribution samples and resulting in poorer generalization. This conclusion is further supported by their observation that iterative DPO, which incorporates updated training data, significantly improves performance and narrows the gap with PPO.

This explanation aligns well with the results on the HH-RLHF dataset, as shown in Figure~\ref{fig:hh-rlhf_ppo_dpo}. However, on more challenging datasets such as CodeContents, iterative DPO shows only limited improvement over standard DPO, and both methods fall significantly short of PPO’s performance, as shown in Figure~\ref{fig:codeContents_ppo_dpo}. This suggests that the offline training setup is not the only factor contributing to DPO’s inferior performance compared to PPO. Based on our theoretical analysis of PPO and DPO, PPO leverages task-specific, token-level information more efficiently. Therefore, for tasks in which intermediate steps play a critical role, such as in CodeContents, PPO substantially outperforms even iterative DPO, despite the latter mitigating some of DPO’s limitations.

\begin{figure}[http]
    \centering
    \includegraphics[width=0.9\linewidth]{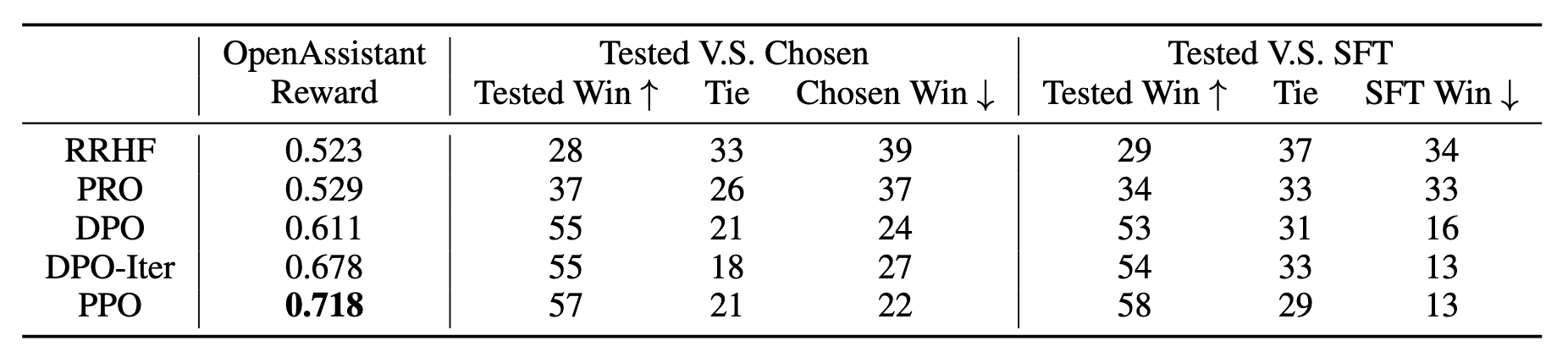}
    \caption{\small{Results on the HH-RLHF test set~\cite{xu2024dpo}. The evaluation metrics include the OpenAssistant rewards and the win rate of models against the chosen responses and SFT model outputs. The OpenAssistant reward model is not used during the training process. Note that DPO is trained on the preference data in the dataset, while Iter DPO is trained on self-generated responses, using a reward model for labeling.}}
    \label{fig:hh-rlhf_ppo_dpo}
\end{figure}

\begin{figure}[http]
    \centering
    \includegraphics[width=0.5\linewidth]{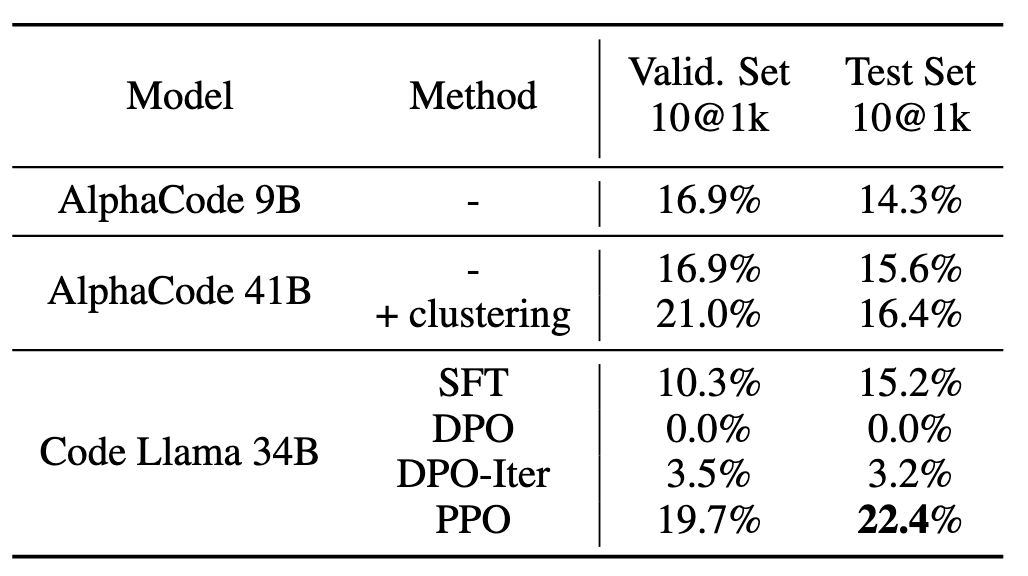}
    \caption{\small{Pass rate on CodeContests dataset~\cite{xu2024dpo}. “10@1k” means that 1000 samples will be evaluated on public tests in the problem description, and only 10 of them will be submitted for hidden tests. Only Python is used here for solving problems, while AlphaCode used both Python and C++.}}
    \label{fig:codeContents_ppo_dpo}
\end{figure}

\textbf{Potential applications of TRePO}: As the derivation of TRePO is closely based on the classic RL concepts, TRePO is widely applicable to those LLM-based application scenarios where classic online RL applies. 
Besides the common single-turn question-answering, we discuss the RL training in several multi-turn applications.

\paragraph{\textit{Multi-turn RL}}
The multi-turn RL problem for LLMs involves training an LLM agent to interact with an external environment (e.g., a human user) across multiple rounds \cite{zhou2025sweet, shani2024multi, abdulhai2023lmrl, zhou2024archer}. In this setting, the environment may contain hidden information not accessible to the agent in the initial task description but available through interaction \cite{zhou2025sweet}. The agent must generate a sequence of actions to achieve a specific goal. For example, when tasked with booking a flight to San Francisco, the LLM agent can ask the user about their preferences (e.g., price, arrival time) and complete the task through multiple steps, such as searching for flights and processing payment. This problem commonly arises in multi-step reasoning \cite{wang2024offline}, multi-step planning \cite{wei2025plangenllms}, multi-turn games \cite{abdulhai2023lmrl}, and multi-turn conversational tasks \cite{bai2024mt}. It requires more complex reasoning and planning capabilities than tasks focused solely on sentence generation or aligning with human preferences.

A key challenge in multi-turn RL problems is that rewards are typically provided only at the terminal state (i.e., after multiple interaction turns), making it difficult for single-turn RL methods (e.g., DPO \cite{rafailov2023direct}) to perform well. These methods lack explicit turn-wise credit assignment, which is essential for providing fine-grained supervision, identifying effective intermediate actions, and reinforcing those that contribute to achieving the final goal. This limitation leads to high variance as the number of turns increases, making convergence more difficult \cite{zhou2024archer, zhou2025sweet}. In contrast, our proposed TRePO algorithm enables step-wise reward estimation at any position in the sequence, based on the terminal reward. This capability makes TRePO well-suited for multi-turn RL settings involving LLM agents.

\paragraph{\textit{Multi-turn Medical Consultations}}
This is a popular application belonging to Multi-turn RL, with the difference that it is an off-line RL problem.
In this scenario, a doctor and a patient would converse about the disease symptoms for typically five to ten rounds.
The patient may upload photos as required, and most of the time, texts are used are describe the symptoms. 
The goal of this task is to train an AI model to simulate a human doctor to ask the correct as few questions as possible, and collect as much symptom information as possible.
Finally, a second human doctor or AI model can confidently make a diagnosis, based on the collected information. 

To convert this off-line RL problem into an online one, an LLM-based mock patient, who can answer new questions based on a specific patient's true information, is required. The work in \cite{ai-patient} states an LLM-based method that first extracts the patient's key information and then uses it as external knowledge to prompt the LLM to answer.

Designing a response-level reward model to measure the quality of a mock consultation can be considered from several aspects. The core one is the generation probability of the target disease for this patient, and other aspects include penalties for excess questions, duplicate questions, irrelevant questions, etc.

\paragraph{\textit{Multi-step Mathematical Reasoning and Coding}}
The mathematical reasoning resembles the coding in terms of the result measuring, which is generally treated as 0-1 loss, namely correct or incorrect. 
The significant difference between them is that, the reasoning steps in mathematical reasoning, from an LLM model output, might be wrong, yet they still may lead to a correct final answer 
\cite{lightman2023letsverifystepstep}.
In comparison, every output token from an LLM coding model matters to the final result.

We first consider this popular case with a 0-1 loss reward model. 
PPO may be underperforming, as PPO usually requires a well-trained reward model to initialize its critical network to stabilize the PPO training.
Some practical results from the LLM developers show that PPO with critic pretraining outperforms both GRPO and REINFORCE \cite{jian2024}.
TRePO does not rely on a critical network, and it might have an advantage in training stability and performance, at the cost of additional sampling and training time.

Empirically, process-based supervision leads to better performance than mere outcome-based supervision. 
We then consider the cases where additional process-based signals are available.
For example, in DeepSeek R1 \cite{guo2025deepseek}, the format of a reasoning is considered; 
in \cite{lightman2023letsverifystepstep,uesato2022solving}, the potential errors in some reasoning steps are considered.
Furthermore, any information that is considered to be a reward or a penalty, as long as they can be detected efficiently, can be simply added into the response-level reward model, and TRePO just serves as a black-box RL algorithm without requiring to designate the token positions of those fired signals.


\subsection{Limitations and future work}
The focus of our study is primarily on the proposal of the trajectory policy gradient theorem and the theoretical analysis of RL methods.
Though we introduced a theoretically grounded RL approach TRePO for LLM alignment, which is conceptually simpler than PPO,  yet it also introduces a cost of additional sampling. 

In future work, we will evaluate and compare our proposed TRePO with other RL and RL-free approaches discussed in this study, in a variety of practical LLM tasks, such as mathematical reasoning, multi-turn medical dialogues, and multi-agent RL.

\section{Related Work}

Several related prior studies, classified into two categories below, can be further analyzed with the findings in our work.
The works in the first category actually combined the second and third directions from Remark \ref{remark-improvement}.
Are there improvements mainly from exact token position information or a stronger response-level reward?
Theories arising from our work tend to favor the latter; however, further experiments are needed to assess the practical contribution from the former.

\textbf{Explicit token-level rewards}: 
Several recent studies~\cite{wu2023fine,cao2024beyond,yoon2024tlcr,wang2023math,lyu2025exploring} have attempted to learn token‑level reward models explicitly from dense labels for RL‑based LLM alignment. 

For instance, Wu et al.~\cite{wu2023fine} explored the use of detailed human feedback at each intermediate step of an LLM’s generation, where such annotations were then used to train a fine-grained reward model for PPO training. However, this method incurs high costs for fine-grained human annotations, which limits its practical applications.

To address this limitation, some studies have investigated using LLMs themselves to provide feedback at intermediate steps or at the token level. For example, Cao et al.\cite{cao2024beyond} utilized an external LLM to generate sentence-level rewards via strategic prompting, thereby enhancing RL training for LLM alignment. In contrast, Yoon et al.\cite{yoon2024tlcr} leveraged mature LLMs to generate token-level preference labels and trained an explicit token-level reward model for RL, a direction known as token-level RL from AI feedback (RLAIF).
Despite the promising results, the reliability of these methods heavily depends on the quality of feedback provided by the LLMs. However, numerous recent studies~\cite{chiang2024chatbot,wei2024systematic} have shown that even the most advanced LLMs are susceptible to various vulnerabilities, such as position bias, length bias, and sensitivity to prompting templates. These limitations pose significant challenges to the practical adoption of token-level rewards derived from AI feedback.

From a different angle, several studies have investigated learning explicit token‑level reward models using automatically generated intermediate‑step annotations for tasks in which step‑wise labeling is feasible, such as mathematical reasoning. 
For instance, Wang et.~al~\cite{wang2023math} introduced an automatic annotation method that evaluates the quality of an intermediate reasoning step by sampling a batch of trajectories from that point and computing the proportion that ends in correct solutions—a quantity that can be viewed conceptually as a value function for the step. A process reward model is then trained from the step-level annotations for PPO-based LLM alignment for the mathematical reasoning task.
Lyu et.~al~\cite{lyu2025exploring} employed the same method to obtain a process reward to develop their RL method for mathematical reasoning tasks.

\textbf{Value functions}:
Another line of work~\cite{kazemnejad2024vineppo,chai2024ma} focuses on value functions of intermediate‑step states and may be relevant to our study.

Kazemnejad et al.\cite{kazemnejad2024vineppo} proposed a reinforcement learning approach called VinePPO for mathematical reasoning, which simplifies PPO by replacing the learned critic used in GAE computation with estimates obtained via Monte Carlo rollouts.
Although it demonstrates improved performance compared to PPO, its value function estimation is conceptually based on a zero-reward assumption, which significantly limits its generality and theoretical analysis.
In contrast, Chai et al.~\cite{chai2024ma} introduced MA-RLHF, which merges tokens into macro actions to reduce decision resolution and promote step-wise value function estimation in PPO, thereby alleviating issues caused by long temporal distances.


\section{Acknowledgments}

We gratefully acknowledge the fundamental contributions of all the authors. 
Their ongoing, in-depth discussions have greatly facilitated the development of the proofs and conclusions.
Besides, we also acknowledge the invaluable feedback from Xianfeng Rui, Zhijing Ye, MingQi Li, Zijia Chen, Bo Peng, Zhining Gu, Jingyang Lin and Hengfeng Zhuang.

\bibliography{references}
\bibliographystyle{unsrt}

\end{document}